\newcommand{\ignore}[1]{}
\newtheorem{myTheorem}{Theorem}
\newtheorem{myDefinition}{Definition}
\newtheorem{myAssumption}{Assumption}
\begin{document}

\title{Autodecompose: A generative self-supervised model for semantic decomposition}

% The \author macro works with any number of authors. There are two commands
% used to separate the names and addresses of multiple authors: \And and \AND.
%
% Using \And between authors leaves it to LaTeX to determine where to break the
% lines. Using \AND forces a line break at that point. So, if LaTeX puts 3 of 4
% authors names on the first line, and the last on the second line, try using
% \AND instead of \And before the third author name.

\author{Mohammad Reza Bonyadi
	
	%\IEEEcompsocitemizethanks{\IEEEcompsocthanksitem }% <-this % stops an unwanted space 
	\thanks{Mohammad Reza Bonyadi (rezabny@gmail.com).}
}

\IEEEtitleabstractindextext{%
\begin{abstract}

We introduce Autodecompose, a novel self-supervised generative model that decomposes data into two semantically independent properties: the desired property, which captures a specific aspect of the data (e.g. the voice in an audio signal), and the context property, which aggregates all other information (e.g. the content of the audio signal), without any labels given. Autodecompose uses two complementary augmentations, one that manipulates the context while preserving the desired property and the other that manipulates the desired property while preserving the context. The augmented variants of the data are encoded by two encoders and reconstructed by a decoder. We prove that one of the encoders embeds the desired property while the other embeds the context property. We apply Autodecompose to audio signals to encode sound source (human voice) and content. We pre-trained the model on YouTube and LibriSpeech datasets and fine-tuned in a self-supervised manner without exposing the labels. Our results showed that, using the sound source encoder of pre-trained Autodecompose, a linear classifier achieves F1 score of 97.6\% in recognizing the voice of 30 speakers using only 10 seconds of labeled samples, compared to 95.7\% for supervised models. Additionally, our experiments showed that Autodecompose is robust against overfitting even when a large model is pre-trained on a small dataset. A large Autodecompose model was pre-trained from scratch on 60 seconds of audio from 3 speakers achieved over 98.5\% F1 score in recognizing those three speakers in other unseen utterances. We finally show that the context encoder embeds information about the content of the speech and ignores the sound source information.

Our sample code for training the model, as well as examples for using the pre-trained models are available here: \url{https://github.com/rezabonyadi/autodecompose}
\end{abstract}
	
\begin{IEEEkeywords}
	Autoencoders, semantic decomposition, speaker recognition
\end{IEEEkeywords}
}

	% make the title area
\maketitle
\IEEEdisplaynontitleabstractindextext
\IEEEpeerreviewmaketitle

\section{Introduction}
Formulating relevant features for semantically interesting classification tasks, such as classifying objects in images or speaker/speech recognition, has been a major undertaking for machine learning. While samples can be different from one another from many different aspects (two image might be different in terms of objects in them, or their background, or luminescence), it is usually the given task that dictates which aspects are more interesting. If labeled examples for the given task are available then a supervised approach could be taken into account. In complex tasks, however, the number of required labeled examples can be large to enable the model to learn a generalizable transformation from raw inputs to relevant features and to the labels, which is labor intensive and prone to over-fitting. An alternative is self-supervised learning which is used to create an overall "understanding" of the given data samples in the form a vector space (aka, embedding or representation space), usually through augmentations. Using the embedding space rather than the data in the raw form, it is expected that less number of labeled samples can be used to successfully distinguish between samples. The self-supervised approaches, however, do not guarantee establishing a semantically relevant embedding space that guarantee to be relevant to the classification task \cite{chen2020simple,tian2020makes}.

In this paper we propose a self-supervised generative approach which guarantees establishing an embedding space that encodes semantically meaningful and relevant features for a given classification task. For a given task, our approach requires two augmentations to be designed: One that randomly alters all "properties" in the raw data except the one that is desired to be used for classification, and another that maintains all properties in the data while randomly alters the one that is desired to be used for classification. These variations are fed into two separate encoders, and the output of the encoders are concatenated and used for reconstructing the original input. We prove that the proposed architecture guarantees encoding relevant information to the desired property (required for classification task) and all other properties separately. Our experiments showed that the encoded information in this self-supervised manner could be used for downstream tasks successfully, reducing the need for data labeling. We test our framework for characterizing audio signals, showing the ability of the method for the down-stream task of speaker recognition. As the proposed method is self-supervised, it can be used for characterizing sound sources it has never encountered before. 

\section{Related works}
We use the following notations throughout the paper:
We define the \textbf{observation space} $ \mathcal{D} $ from which data samples are observed, $ d \in \mathcal{D} $. 

\subsection{Self-supervised learning}
\label{sec:self-supervised}
Self-supervised learning aims to learn an embedding $ E:\mathcal{D}\to \mathbb{R}^d $ such that $ E(x) $ and $ E(x') $ are close (measured by a dissimilarity metric such as cosine similarity\cite{zimmermann2021contrastive,chen2020simple}) only if $ x\in \mathcal{D} $ is a data sample and $ x'\in \mathcal{D} $ is a variation of $ x $, usually generated by a set of augmentations\footnote{We define augmentation formally in next sections. An augmentation, $ A:\mathcal{D}\to\mathcal{D} $ is a stochastic function following a distribution, generating a new sample given a sample $ x \in \mathcal{D}$, in a way that some properties of the given sample are preserved.}. Optimizing the encoder $ E $ to ensure $ E(x) $ and $ E(x') $ are close may lead to a collapse of the embedding space, where all samples are mapped to one single vector. To avoid collapsed representations, two main families of approaches have been used: (i) contrasting samples \cite{chen2020simple,oord2018representation,tian2020contrastive,he2020momentum}; and (ii) diversification of representation space \cite{zbontar2021barlow,grill2020bootstrap,chen2021exploring,bojanowski2017unsupervised,richemond2020byol}.

Contrastive loss avoids collapse by introducing negative samples, i.e., the encoder is trained to bring $ x $ and its variation closer while push samples that are dissimilar to $ x $ further away. While this approach is effective, selection of negative samples is usually difficult. For example, there is no guarantee that a randomly chosen sample is dissimilar (in the sense of the desired down-stream task) to the given sample. Recent methods avoid collapse by introducing a secondary objective (regularization), usually added to the closeness metric, to ensure the embedded vectors are diversified in terms of how they use the dimensions in the embedding space. Similar results can achieve by introducing batch normalization (e.g., \cite{grill2020bootstrap,richemond2020byol}), centering and sharpening \cite{caron2021emerging}, or reconstructing the original sample\cite{liu2021self}. See \cite{liu2021self} for a survey on generative and contrastive self-supervised learning algorithms.

The augmentations applied for generating positive samples has shown to have an impact on the performance of the model when it applied to final downstream task \cite{chen2020simple,tian2020makes}. In other words, depending on the down-stream task, some approaches for generating positive (usually, augmentations) could be more effective than others. It was further found that self-supervised approaches may be sensitive to the features that are not important to the downstream task (e.g., in a dataset, most birds could be flying in the sky, which may lead to characterizing birds and sky together) \cite{zhao2021distilling}. 

There have been more recent efforts in the area of self-supervised learning. It has been shown that optimizing contrastrive loss is equivalent to finding the inverse of the generative process underlying data generation \cite{zimmermann2021contrastive}. This of course is tightly dependent on the augmentations used for generating the positive samples in the learning process. Further, it has been shown that the right augmentation process can lead to decomposition of style and content \cite{von2021self}. 

\subsection{Deep decomposition}
Autoencoders \cite{hinton2006reducing} represent the information in data points in a latent space and reconstructs the original data from the encoded vector. To improve the stability and robustness of the model, variational autoencoders (VAEs) were introduced \cite{kingma2013auto}. VAEs optimize for both the reconstruction of the input data (reconstruction loss) and the similarity between the latent space and a given prior distribution, measured by the KL divergence.

In a vanilla VAE, the KL divergence term encourages the latent space to be similar to the prior distribution, but the strength of this term is fixed. The $\beta$-VAE model \cite{higgins2016beta,higgins2021unsupervised} introduces a new hyperparameter, $\beta$, which controls the trade-off between reconstruction error and the KL divergence between the latent space and the prior distribution. Increasing the value of $\beta$ encourages the latent space to be more similar to the prior distribution, leading to more interpretable and disentangled latent representations. However, this may come at the cost of higher reconstruction error. The $\beta$ hyperparameter allows the model to explicitly control this trade-off and fine-tune the model's performance.

\ignore{
\subsection{Speaker models}
For identification ...

One limitation that comes with supervised models is that the trained model on a labeled dataset works well on that dataset while it might not show the same level of accuracy on other datasets, i.e., the sound features are domain sensitive [???] (see section ??? on why this is the case). 

While there has been a great progress in these areas through supervised methods [??, ??, ??], self-supervised methods have been applied only recently. 
}
\subsection{Voice and content decomposition}
Voice conversion (VC) is the process of generating an audio signal using a specific voice and content. In this paper, we focus on VC methods that use audio signals for the voice and content, and specifically on those that employ autoencoder architectures rather than generative adversarial networks or other techniques.

The goal of VC methods based on autoencoder architectures is to find two probabilistically independent representations for a given audio signal ($E_s$ and $E_c$), one representing the sound and the other representing the content (as shown in Fig. \ref{fig:vc_architecture}). These methods use various techniques, such as architectural choices (such as information bottlenecks and normalization) and providing labels for speakers or phonemes, to ensure independence between the information encoded by these encoders and to ensure that the sound and content are encoded by the appropriate encoder.

AutoVC \cite{qian2019autovc} is a VC model that uses two encoders to encode the sound ($E_s(.)$) and content ($E_c(.)$) of a given audio signal. The output of these encoders is then concatenated and fed to a decoder to reconstruct the original audio signal. This model optimizes two loss functions: a reconstruction loss, which ensures that the reconstructed signal is the same as the input, and a content maintenance loss, which ensures that the content is well preserved during the reconstruction process. To ensure that the sound and content are encoded independently in these encoders and to prevent any "leakage" of information between them, the input to the sound encoder ($E_s$) is taken from another audio sample of the same speaker. The content encoder ($E_c$) is designed with a bottleneck architecture (i.e., a small number of dimensions for the embedding) to reduce the risk of ignoring the sound encoder during the reconstruction process (as shown in Fig. \ref{fig:vc_architecture}(b)). Other techniques such as normalization are also used to improve performance.

\begin{figure}
	\centering
	\begin{tabular}{cc}
		\includegraphics[width=0.23\textwidth]{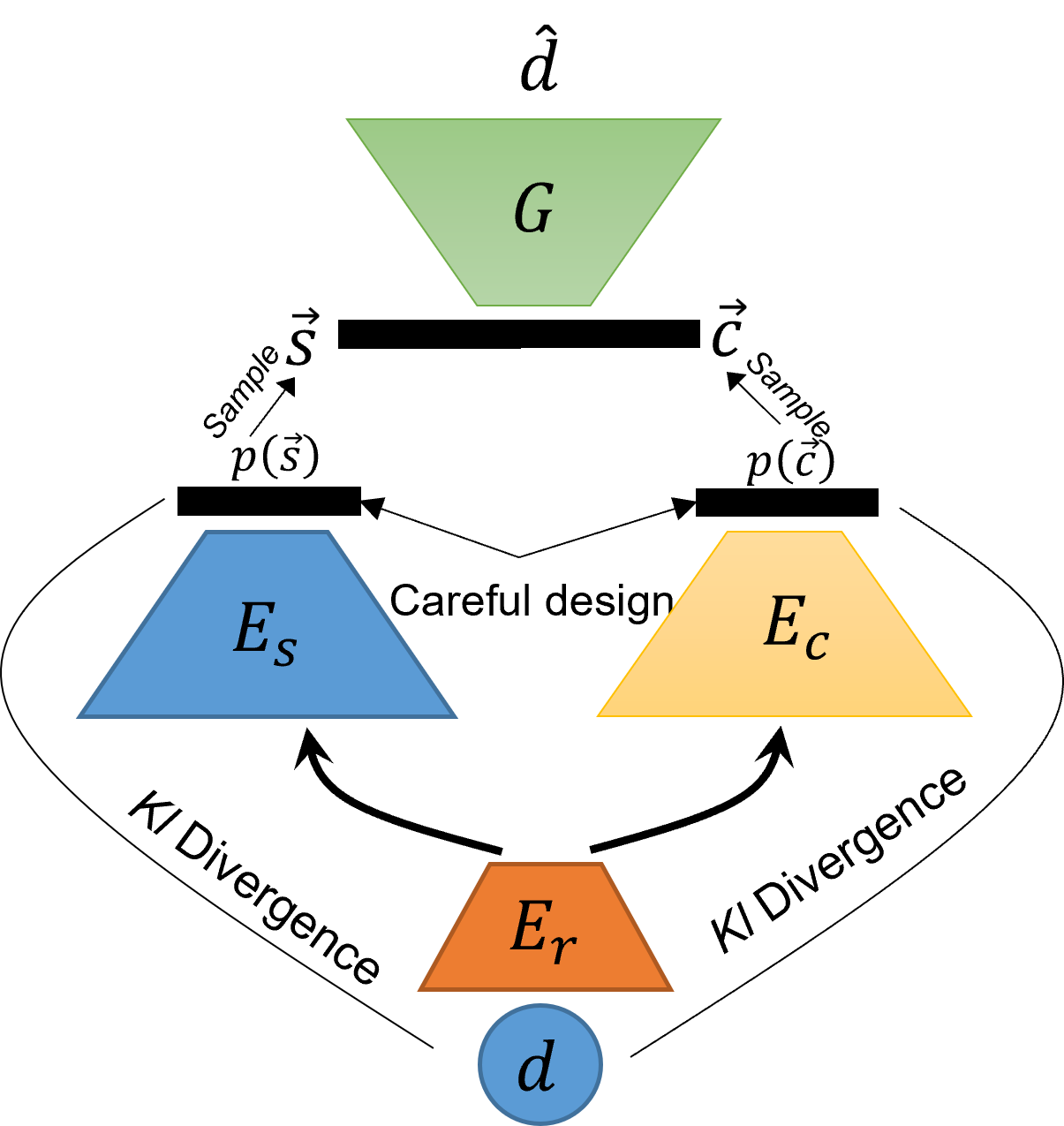}&\includegraphics[width=0.185\textwidth]{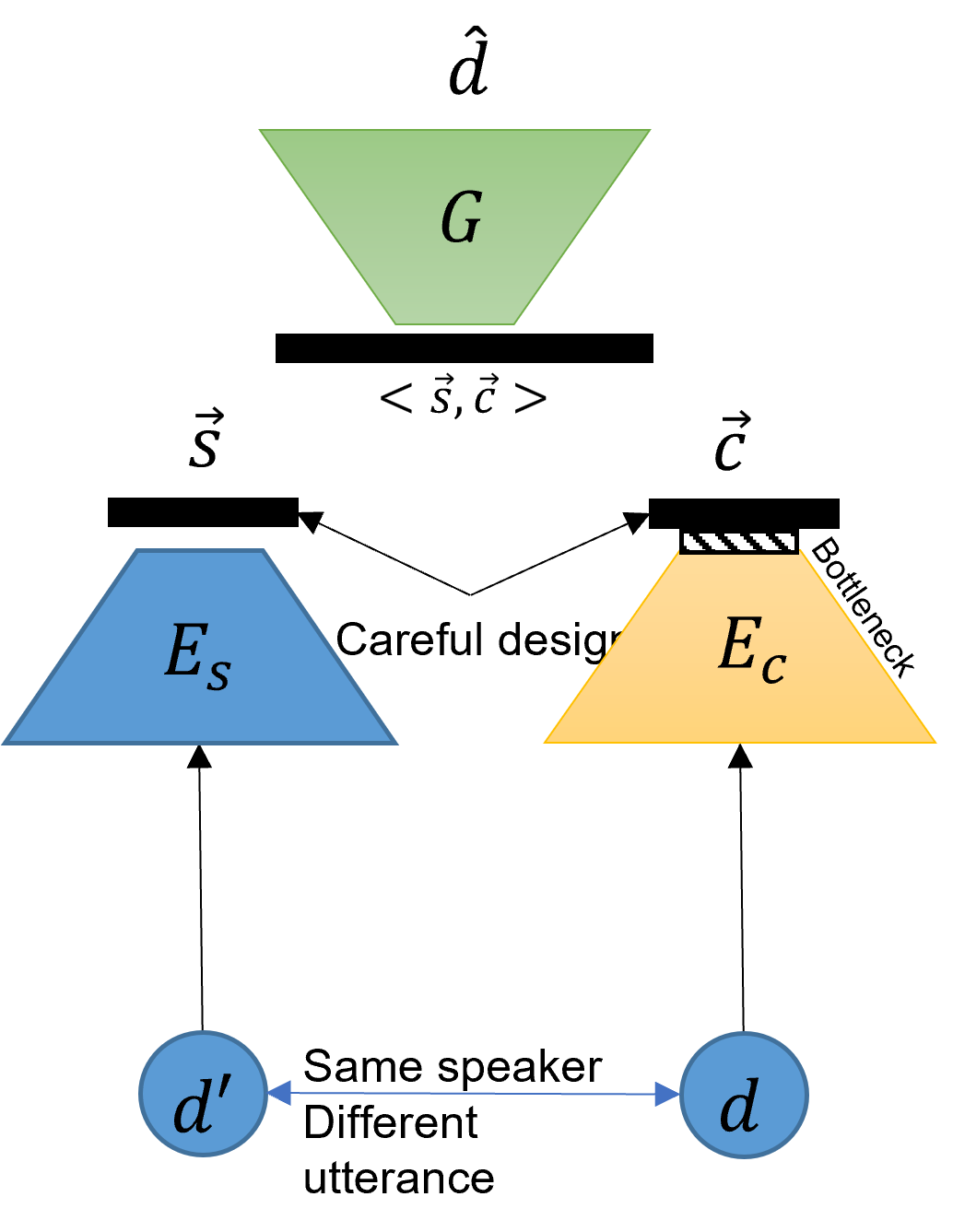}\\
		(a)&(b)
	\end{tabular}
	\centering	
	\caption{Architecture of different Voice conversion models.}
	\label{fig:vc_architecture}
\end{figure}

In the method introduced in \cite{lian2022robust}, three encoders were used: a shared encoder $E_r$, an encoder for embedding content ($E_c$), and an encoder for encoding speaker information ($E_s$). The content and speaker encoders were implemented as variational autoencoders, encoding probability distributions representing the content and speaker information in the input data. A decoder then took samples from these distributions and generated an estimation of the input audio signal. Two loss functions were optimized: one focused on reconstructing the input signal, and the other designed to ensure independence between the content and speaker probability distributions using KL divergence.

To ensure that the sound and content information were encoded by the appropriate encoders, different architectures were used for $E_s$ and $E_c$. Since the $E_s$ encoder was intended to encode time-independent voice information, its output was a vector. Normalization and bottlenecks were also used to ensure that sound information was carried by this encoder. The $E_c$ encoder, on the other hand, was intended to encode content information that is distributed in time, so its output was considered to be time-distributed. However, \cite{lian2022towards} later showed that the content latent space produced by this architecture was almost random and did not correlate with the phonemes of the content in the audio signals. To address this issue, a "phoneme-aware" architecture was proposed that conditioned the $E_c$ encoder on the phoneme labels of the audio signal, ensuring that the content embedding maintained content information at the phoneme level.

\section{Proposed approach}
We first provide relevant definitions and terminologies and then discuss our proposed approach.

\subsection{Our framework of definitions and assumptions}
Lets us define some terminologies we further use in this paper.

\begin{myDefinition}
	\label{def:basics}
	We define the \textbf{observation space} $ \mathcal{D} $ from which samples are observed, $ d \in \mathcal{D} $. We also define an \textbf{observer}, perceiving properties of samples by a set of functions, $ F^{(i)}: \mathcal{D} \to \mathcal{Z}^{(i)} $, $ i \in \{0, ..., k-1\} $, each function describing one \textbf{semantic property} of samples. Finally, we define the \textbf{observations dissimilarity metric}, $ L: \mathcal{D} \times \mathcal{D} \to \mathbb{R} $, as a dissimilarity function, $ L(., .) $ is bounded from below, and $ L(d, \hat{d}) $ is at its minimum if and only if $ F^{(i)}(d)=F^{(i)}(\hat{d}) $ for all $ i $, $ d $ and $\hat{d}$ are arbitrary samples from $ \mathcal{D} $. 
\end{myDefinition}

The semantic properties form the way an observer (usually a human) perceives samples in $ \mathcal{D} $, i.e., they are semantically meaningful for the observer. The observation dissimilarity metric measures the extent to which the observer perceives $ d $ and $\hat{d}$ as similar. This means if we alter a sample $ d $ in a way that the semantic properties remain unchanged then the altered sample and the original sample are not distinguishable from the observer perspective, i.e., semantically the same. In other words, if observation dissimilarity metric is at its minimum then $ d $ and $\hat{d}$ are not distinguishable from observer perspective. 

We consider the following assumptions for the observable properties.

\begin{myAssumption}	
	\label{ass:basic_assumptions}
	
	\textbf{(a)} For any given sample $ d $, the value of all semantic properties can be determined by the observer.
	
	\textbf{(b)} The functions $ F^{(i)} $ for semantic properties are not explicitly known. 
	
	\textbf{(c)} We assume $ \mathcal{Z}^{(i)} $ is a vector space $ \mathbb{R}^{n_i} $, $ n_i \in \mathbb{N} $. 
	
	\textbf{(d)} We assume $ k=2 $, where $ F(d) = F^{(0)}(d) $, $ F: \mathcal{D} \to \mathbb{R}^n $, is called \textbf{the desirable semantic property} (DSP) and $ F'(d) $, $ F': \mathcal{D} \to \mathbb{R}^{n'} $, is an arbitrary aggregation of all other semantic properties, called the \textbf{context semantic property} (CSP). 
	
	\textbf{(e)} DSP is mutually independent from all other semantic properties (and, hence, from the CSP), i.e., given $ F(d) $, $ F'(d) $ cannot be calculated and vice versa.  
\end{myAssumption}

Assumption \ref{ass:basic_assumptions}(a) and (b) indicate that the observer has a complete understanding of the semantic properties, although the mathematical function of these properties is not accessible (e.g., human can determine objects in an image, but the mathematical formula for performing this task is not accessible). Regarding assumption \ref{ass:basic_assumptions}(d), we consider the first property ($ F^{(0)}(.) $) describes one aspect of the samples (we call this aspect the \textit{desirable semantic property, DSP} throughout the paper) and the other property describes an arbitrary aggregation of all other aspects (the context for that property, CSP). Assumption \ref{ass:basic_assumptions}(e) ensures that property values do not replicate one other, i.e., given the value for one property, the value for the other property cannot be calculated for all $ d $. In other words, properties $ F(d) $ and $ F'(d) $ encode mutually independent information \footnote{An alternative measure for this independence can be the K-L divergence.}.

An example for the observation space is the space of all possible speech signals. Example of DSP is the sound source in the audio signals (e.g., person voice), where CSP would be the content of the audio signal (including the phoneme, emotion, rhythm, among other properties). Given the sound source in an audio signal, the content cannot be recovered, and vice versa (Assumption \ref{ass:basic_assumptions}(e)). An example of observations dissimilarity measure is the euclidean distance between samples of two speech signals, i.e., if the euclidean distance is zero then an observer would not be able to distinguish between the two audio signals. Other distances could be also used, given the specific use case. For example, in an object recognition task from images, euclidean distance is minimized if all semantic properties are equal for two given samples. This, however, can be too restrictive as it compares images at the pixel level rather than semantics. Hence, in complex tasks, the dissimilarity metric could be designed by the observer.

\ignore{
\begin{myDefinition}
We say the function A completely manipulates property F of d if and only if F(d) cannot be estimated to any arbitrary error given all a from A(d)
We say two functions H and H' are \textit{independent} iff there is no function R such that R(H(x)) = H'(x)
a and d are F-independent 
there is no R such that for all d, R(A(d))=F(d)
\end{myDefinition}
}
We finally define augmentations and complement of an augmentation as follows:

\begin{myDefinition}
	\label{def:augmentations}
	We define an \textbf{augmentation} as a stochastic function $ A: \mathcal{D} \to \mathcal{D} $. For any given sample $ d $, $ a \sim A(d) $ where $ F(d)=F(a) $ and $ F'(d) $ and $ F'(a) $ are independent, i.e., $ F'(d) $ cannot be calculated given $ a $. 
	
	We also define $ A': \mathcal{D} \to \mathcal{D} $, the \textbf{complement} of $ A(.) $ as a stochastic function. For any given sample $ d $, $ a' \sim A'(d) $ where $ F'(d) = F'(a') $ and $ F(d) $ is independent of $ F(a') $, i.e., $ F(d) $ cannot be calculated given $ a' $.  
\end{myDefinition}

In other words, the augmentation $ A(.) $ randomly (with some distribution) manipulates all semantic properties for a given sample except for the DSP ($ F(d)=F(A(d)) $ For all $ d $). In contrast, the augmentation $ A'(.) $ randomly (with some distribution) manipulates the DSP for a given sample and leaves the context properties untouched. Note that, we assume $ A $ manipulates the sample $ d $ in a way that $ F'(d) $ cannot be calculated given $ a $. Also, we assume $ A' $ manipulates the sample $ d $ in a way that $ F(d) $ cannot be calculated given $ a $.

The definition of augmentation here extends the definition provided in \cite{von2021self} by assuming that the latent space controlling the augmentation function is decomposable to two sub-spaces, $ \mathbb{R}^n \times \mathbb{R}^{n'} $, one of which represents a specific property of the sample (DSP) and the other represent the "context" for that property (CSP). This allows us to manipulate a particular property or its context independently. It also allows us to investigate the decomposition capabilities of different algorithms.

Our definition here can also assist with explaining the observation that some positive samples generated by some specific types of augmentation are more suited for some final downstream tasks \cite{chen2020simple,tian2020makes}. Indeed, one can hypothesis that an augmentation which preserves data properties that are related to the downstream task could improve the performance for that task.

\subsection{Autodecompose: An augmentation-driven learning}
In this paper we propose Autodecompose as an alternative approach for estimating the desirable property $ F(.) $. Autodecompose is composed of two encoders $ E: \mathcal{D} \to \mathbb{R}^n $ and $ E': \mathcal{D} \to \mathbb{R}^{n'} $ and a decoder $ G: \mathbb{R}^{n \times n'} \to \mathcal{D} $ such that $ L\left(d, \hat{d}\right) $ is minimized for all $ d \in \mathcal{D} $, where $ L(., .) $ is a dissimilarity measure (see definition \ref{def:basics}), $ d $ is a sample from $\mathcal{D}$, $ \hat{d} = G\left(<E(a), E'(a')>\right) $, $ a \sim A(d) $, and $ a' \sim A'(d) $. The overall architecture for Autodecompose has been shown in Fig. \ref{fig:autodecompose_basic}. 

\begin{figure}
	\label{fig:autodecompose_basic}
	\centering
	\includegraphics[width=0.35\textwidth]{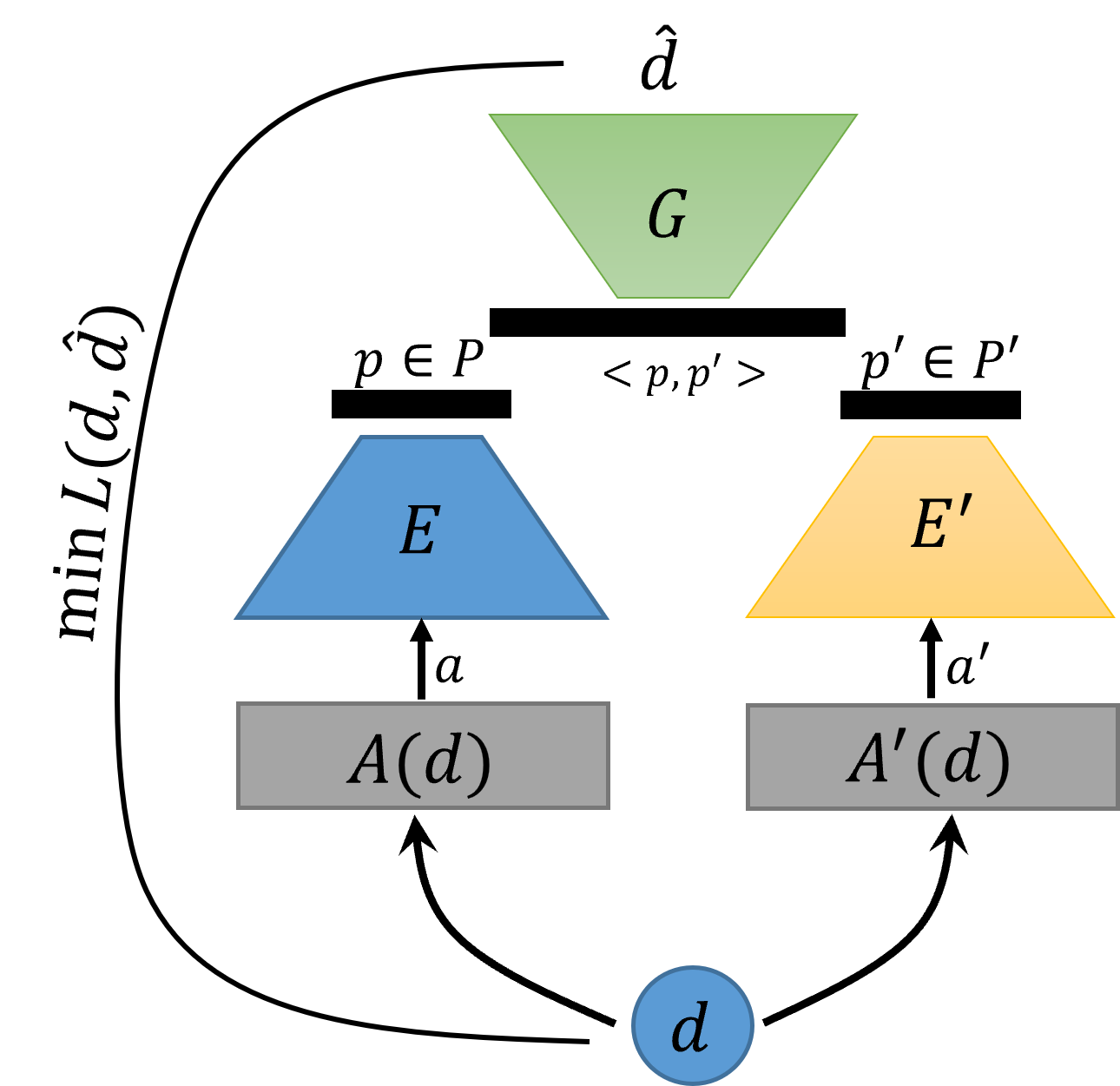}
	\centering	
	\caption{Proposed autodecompose architecture.}
\end{figure}

With this setting, we prove that if the encoders and the decoder guarantee minimizing $ L(d, \hat{d}) $ for any $ d \in \mathcal{D} $ then $ E(d) $ is a representation of $ F(d) $ and $ E'(d) $ is a representation of $ F'(d) $. 

\begin{myTheorem}
	\label{thr:main_theorem}	
	Assume $ L\left(d, \hat{d}\right) $ is at its minimum for all $ d \in \mathcal{D} $, where $ \hat{d} = G\left(<e, e'>\right) $, $ e=E(a) $, $ e'=E'(a') $, $ a \sim A(d) $ and $ a' \sim A'(d) $. We prove that there exists a function $H: \mathcal{P} \to \mathcal{Z}$ for all $ d \in \mathcal{D} $ such that $ H(E(d))=F(d)$.
\end{myTheorem}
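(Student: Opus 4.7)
The strategy is to convert the minimality of $L$ into a semantic equation about $F\circ G$, then use the independence of $F$ and $F'$ granted by Assumption~\ref{ass:basic_assumptions}(e) to decouple the two arguments of $G$, and finally to define $H$ by freezing the second argument. Morally, because $A$ and $A'$ let us independently realise any $(F,F')$-pair, the DSP of the reconstruction cannot depend on the output of the second encoder at all.

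First I would unpack the hypothesis. By Definition~\ref{def:basics}, $L(d,\hat{d})$ attains its minimum only when $F(\hat{d})=F(d)$ (the companion equation $F'(\hat{d})=F'(d)$ will drive the analogous claim for $E'$). Substituting $\hat{d}=G(\langle E(a),E'(a')\rangle)$ and using $F(a)=F(d)$ from Definition~\ref{def:augmentations} yields
\begin{equation*}
F\bigl(G(\langle E(a),E'(a')\rangle)\bigr)=F(a),
\end{equation*}
whenever $a$ and $a'$ are possible outcomes of $A(d)$ and $A'(d)$ for the same $d$.

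Next I would decouple the two arguments of $G$. By Definition~\ref{def:augmentations}, the support of $A(d)$ consists precisely of the samples whose DSP equals $F(d)$, and the support of $A'(d)$ of those whose CSP equals $F'(d)$. Given arbitrary $a,a'\in\mathcal{D}$, Assumption~\ref{ass:basic_assumptions}(e) makes the pair $\bigl(F(a),F'(a')\bigr)$ jointly realisable, so one can pick $d^\ast\in\mathcal{D}$ with $F(d^\ast)=F(a)$ and $F'(d^\ast)=F'(a')$; this single $d^\ast$ simultaneously puts $a$ in $\mathrm{supp}(A(d^\ast))$ and $a'$ in $\mathrm{supp}(A'(d^\ast))$. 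The displayed identity therefore extends to every $(a,a')\in\mathcal{D}\times\mathcal{D}$, meaning $F\circ G$ is constant in its second argument. Fixing any $a'_0\in\mathcal{D}$, setting $z'_0:=E'(a'_0)$, and defining
\begin{equation*}
H(z):=F\bigl(G(\langle z,z'_0\rangle)\bigr)
\end{equation*}
gives $H(E(a))=F(a)$ for every $a\in\mathcal{D}$; specialising $a=d$ (which lies in $\mathrm{supp}(A(d))$ because trivially $F(d)=F(d)$) yields $H(E(d))=F(d)$, the required conclusion.

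I expect the decoupling step to be the main obstacle. It requires reading Definition~\ref{def:augmentations} as characterising the supports of $A$ and $A'$ purely through the values of $F$ and $F'$, and then leaning on Assumption~\ref{ass:basic_assumptions}(e) in its strongest form, namely that every combination of DSP and CSP values is realised by some $d^\ast\in\mathcal{D}$. If the augmentations harboured implicit couplings between the two properties, or if the joint range of $(F,F')$ were a proper subset of $\mathbb{R}^n\times\mathbb{R}^{n'}$, the single-$d^\ast$ construction would fail and one would need either a transitivity argument across chains of admissible augmentations or a density/measurability condition. Everything else—unfolding the minimum of $L$ and defining $H$—is routine.
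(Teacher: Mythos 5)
Your route is genuinely different from the paper's. The paper argues by contradiction at the level of information flow: assuming no $H$ with $H(E(a))=F(a)$ exists, it notes that $A'$ randomizes the DSP so $E'(a')$ cannot determine $F(d)$ either, hence neither input to $G$ carries $F(d)$, so $F(\hat{d})=F(d)$ could not hold, contradicting minimality of $L$. You instead build $H$ explicitly, first turning minimality into the identity $F\bigl(G(\langle E(a),E'(a')\rangle)\bigr)=F(a)$ and then arguing that $F\circ G$ is constant in its second argument so that $H(z)=F\bigl(G(\langle z,z'_0\rangle)\bigr)$ works. If it went through, your version would be more informative (it exhibits $H$ rather than merely asserting its existence), and it makes explicit exactly which structural facts about the augmentations are being used.

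However, the decoupling step is a genuine gap under the paper's stated definitions, not just a caveat. Definition~\ref{def:augmentations} only guarantees that every $a\sim A(d)$ satisfies $F(a)=F(d)$ and that $F'(d)$ cannot be recovered from $a$; it does \emph{not} say that every $x$ with $F(x)=F(d)$ lies in the support of $A(d)$ (nor even that $d$ itself does), so you may not conclude $a\in\mathrm{supp}(A(d^\ast))$ from $F(a)=F(d^\ast)$, and your specialization $a=d$ at the end is likewise unjustified. Similarly, Assumption~\ref{ass:basic_assumptions}(e) asserts mutual non-computability of $F$ and $F'$, which is strictly weaker than the joint realizability you invoke (that every pair of DSP/CSP values is attained by some $d^\ast$). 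Without both strengthenings, the displayed identity is known only on ``diagonal'' pairs $(a,a')$ arising from a common $d$, and the $H$ obtained by freezing $z'_0$ need not satisfy $H(E(d))=F(d)$. You flag these obstacles yourself, but flagging them does not discharge them. To be fair, the paper's own proof is also informal --- in particular its inference from ``neither $e$ nor $e'$ alone determines $F(d)$'' to ``the pair $(e,e')$ does not'' is not airtight, and it too slides between constraints on $E(a)$ and the conclusion about $E(d)$ --- but its contradiction argument does not need the exact-support and product-range hypotheses on which your construction essentially depends; as written, your proof requires assumptions the paper never states.
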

\begin{proof}	
	\textbf{Proof by contradiction}.
	
	Based on Definition \ref{def:basics}, if $ L(\hat{d}, d) $ is at its minimum then $ F(d)=F(\hat{d}) $ and $ F'(d)=F'(\hat{d}) $. 
	
	\ignore{
		Based on Definition \ref{def:augmentations}, $ A(.) $ randomly manipulates the value of $ F'(d) $ (information about CSP is manipulated by $ A $, i.e., $ F'(d) \ne F'(A(d))$). Because $ E(.) $ is deterministic, there is no function $ H:\mathcal{P} \to \mathcal{Z} $ and encoder $ E(.) $ such that $ H(E(a)) = F'(d) $. In other words, there is no encoder $ E(a) $ that can extract $ F'(d) $ for all $ d $.}
	
	Based on Definition \ref{def:augmentations}, $ A'(.) $ randomly manipulates the value of $ F(d) $. As $ E(.) $ is a deterministic transformation, $ F(d) $ cannot be calculated given $ E'(a') $. 
	
	We assume (contradiction assumption) there is not function $ H: \mathcal{P} \to \mathcal{Z} $ such that $ H(E(a)) = F(a)$ for all $ d $. In this case, $ E'(a') $ and $ E(a) $ are both independent of $ F(d) $. Hence, $ G(<E(a), E'(a')>) $ would be independent of $ F(d) $. Therefore, $ F(d)=F(\hat{d}) $ cannot be true, which is in contradict with the assumption that $ L(\hat{d}, d) $ is at its minimum. Thus, the contradicting assumption was not correct and there exists a function $H: P \to \mathcal{Z}$ for all $ d \in \mathcal{D} $ such that $ H(E(d))=F(d)$. 
\end{proof}

\ignore{
	
	There is no function R such that, for all d, R(a) = F'(d)
	There is no function R such that, for all d, R(a') = F(d)
	Hence, 
	There is no function R such that, for all d, R(e) = F'(d)
	There is no function R such that, for all d, R(e') = F(d)
	
	We assume (contradictory) there is no function R such that R(e) = F(d)
	
	This means none of e and e' can be used 
	
\begin{myTheorem}

	Assume $ L\left(x, \hat{x}\right) $ is at its minimum for all $ x \in \mathcal{D} $, where $ \hat{x} = G\left(<e, e'>\right) $, $ e=E(a) $, $ e'=E'(a') $, $ a \sim A(x) $ and $ a' \sim A'(x) $. 
	
	We prove that 
	
	$ \forall x_1, x_2 \in \mathcal{D}, ~ E(x_1)=E(x_2) \implies F(x_1)=F(x_2)$
\end{myTheorem}
\begin{proof}	
	\textbf{Proof by contradiction}.
	
	Based on Definition \ref{def:basics}, if $ L(\hat{d}, d) $ is at its minimum then $ F(d)=F(\hat{d}) $ and $ F'(d)=F'(\hat{d}) $. 
	
	Based on Definition \ref{def:augmentations}, $ A'(.) $ randomly manipulates the value of $ F(.) $. Hence, the property $ F(.) $ cannot be encoded by $ E'(.) $.
	
	We assume (contradictory assumption) there exists two samples $ x_1, x_2 \in \mathcal{D} $ for which $ E(x_1)=E(x_2) $ while $ F(x_1)\neq F(x_2) $. With this assumption, and given that $ F(x_i) $ cannot be retrieved given $ a_i' $, $ G\left(<e_i, e'_i>\right) $ would not be able to fully reconstruct (minimize $ L(., .) $) for $ x_1 $ or $ x_2 $ (all inputs to $ G(.) $ are independent of $ F(x_1) $ and $ F(x_2) $). This is in contradict with the assumption that $ L(\hat{x}, x) $ is at its minimum for all $ x $.
	
\end{proof}
}

Intuitively, $ \hat{d}=G(<e, e'>) $ can be similar to $ d $ (reconstructed to minimize $ L(., .) $) only if $ <e, e'> $ carries complete information about the properties $ F(d) $ and $ F'(d) $. The Theorem proves that all information related $ F(d) $ is carried only by $ e $ and not by $ e' $, and all information related to $ F'(d) $ is carried by $ e' $ and not by $ e $. A simple reason is that $ e $ is independent of $ F'(.) $ because $ A(.) $ manipulates $ F'(.) $. According to Theorem \ref{thr:main_theorem}, an optimal encoder $ E(.) $ would guarantee providing a representation of $ F(.) $ for all samples, as, otherwise, the loss function $ L(., .) $ could not be minimized. A similar proof can be provided to show that the $ F'(.) $ is represented by $ E'(.) $. 

There is no reason to assume that $ E(.) $ is a linear function of $ F(.) $. One should also note that $ E $ cannot provide any representation for $ F'(.) $ as this property is not accessible to $ E $ (randomly manipulated by $ A $). The same is true for $ E' $ and $ F(.) $. In other words, $ E $ and $ E' $ establish separate representations for DSP and CSP (we will test this experimentally in the Experiments section). While there is no reason to assume these representations are linear relationships, our experiments show that these relationships are close to linear (i.e., $ E $ and $ E' $ are linear representations of $ F $ and $ F' $).

\textbf{Role of the generator}: The generator $ G(.) $ has an important role here. Augmentations $ A $ and $ A' $ cannot be necessarily used as the positive and negative instances in contrastive loss or self-supervised frameworks like simCLR. The reason is that $ A(d) $ does not represent $ F(d) $, but only provides samples for which $ F(a)=F(d) $, given any $ d $. Hence, using contrastive loss with negative samples generated by $ A' $ and positive samples generated by $ A $ would lead to learning these augmentation functions rather than the DSP and CSP. Consider, for example, $ A' $ masks some parts of the input while $ A $ doesn't (and performs some other type of manipulation). A constrastive loss then may learn that the existance of masked areas in the input translates to negative samples, which is not related to DSP or CSP. The generator in Autodecompose, however, ensures the captured features in the encoders represent the actual DSP and CSP as, otherwise, reconstruction would not be successful. One hypothesis here is that the generator $ G $ makes a connection between the "mechanics" of augmentations and how they relate to reality because, ultimately, $ G $ needs to translate the encoded augmented signals to the original format for final dissimilarity comparison. This leads to the critical role of the generator, which is responsible for ensuring $ F $ and $ F' $ are encoded by $ E $ and $ E' $, as proven by the Theorem \ref{thr:main_theorem}. This is true despite the fact that $ A $ and $ A' $ do not guarantee generating all possible samples with similar properties for a given sample. 

\begin{figure}
	\label{fig:overall}
	\centering
	\includegraphics[width=0.45\textwidth]{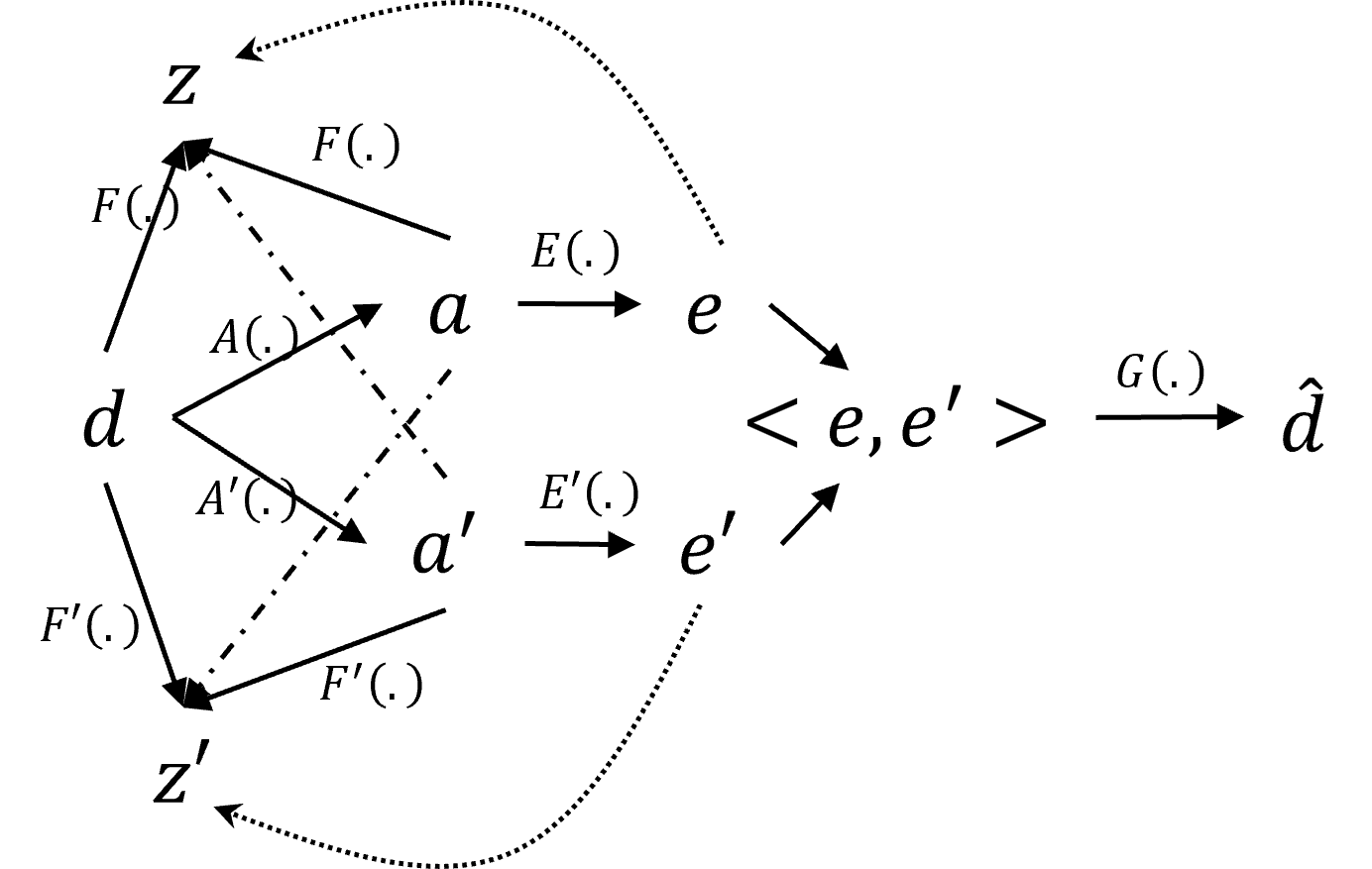}
	\centering	
	\caption{The dot-dashed line indicates that there is no such function that maps $ a $ to $ z' $ and $ a' $ to $ z $. The dotted line indicates that $ E(.) $ represents $ F(.) $ and $ E'(.) $ represents $ F'(.) $ (see Theorem).}
\end{figure}

Figure \ref{fig:overall} shows the relationships between all variables and functions discussed above.

\textbf{Implementation}: For implementation of autodecompose we consider $ E(.) $, $E'(.)$, and $ G(.) $ as parameterized functions and we optimize their parameters with the following loss function:

\begin{eqnarray}
	\label{eq:autodecom_base1}
	\min_{\omega, \theta, \theta'} L\left(d, G_{\omega}\left(<E_{\theta}(a), E'_{\theta'}(a')>\right)\right)
\end{eqnarray}

where $ L(., .) $ is a dissimilarity measure (i.e., reconstruction loss, adversial loss), $ d $ is sampled from $\mathcal{D}$, $ E: \mathcal{D} \to P $ and $ E': \mathcal{D} \to P' $ are encoders, $ G: P \times P' \to \mathcal{D} $ is a generator, and $ P $ and $ P' $ are two vector spaces, $ a \sim A(d) $ and $ a' \sim A'(d) $, and we use an optimization method to find optimal values for $ \omega $, $ \theta $, and $ \theta' $ given some samples of $d\in \mathcal{D}$. 

\subsection{Differences with self-supervised learning and VC}
The decomposition approaches, such as VAE and $\beta$-VAE, guarantee establishing an embedding space, each dimension representing an independent feature of the data points. There is, however, no guarantee that the mutually independent features are semantically meaningful, which is the main difference between those approaches and autodecompose. The same limitation applies to VC algorithms. Different VC algorithms ensure semantic relevance of the encoded features by carefully selecting the architecture of the encoders or providing labeled instances. 

Self-supervised approaches use augmentations to train basic models, similar to our autodecompose. The main difference is, however, that the augmentations are not picked according to semantics required for learning features related to DSP. Unlike typical self-supervised methods (see Section \ref{sec:self-supervised}), autodecompose does not suffer from collapse. The reason is that the generator $ G $ mirrors the original samples in a way that the observer could not distinguish between the two (original and the mirrored), ensuring to encode semantically meaningful features of the observations by the encoders, controlled by the provided augmentations. Collapse usually takes place when mapping to a single point in the embedding space can satisfy the loss function, which is not possible in the autodecompose architecture. Typical self-supervised approaches usually use negative samples (e.g., simCLR) or diversification techniques to ensure diversity of the instances is maintained and avoid collapse. 

\section{Autodecompose case study: Audio signals}
In this paper, we adopt an autodecompose model that decomposes a given audio signal to its sound source and content, corresponding with $ E_s(.) $ and and $ E_c(.) $, respectively. We optimize the following loss function:
 
\begin{eqnarray}
	\min_{\omega, \theta, \theta'} L\left(d, G\left(E_s(\vec{s}; \theta), E_c(\vec{c}; \theta'); \omega\right)\right)
\end{eqnarray}
where $ A_s:\mathcal{D} \to \mathcal{D} $ and $ A_c: \mathcal{D} \to \mathcal{D} $ are two complementary augmentations (see definition \ref{def:augmentations}), $ \vec{s} \sim A_s(d) $ and $ \vec{c} \sim A_c(d) $. For our purposes in this article (encoding content and sound separately given an audio signal), we design $ A_s(d) $ in a way that it randomly manipulates the content of $ d $ while maintains the sound source. In contrast, we design $ A_c(d) $ in a way that it randomly manipulates the sound source while maintains everything else (i.e., content). With this design, the optimal parameters for $ E_s(.; \theta) $ and $ E_c(.; \theta') $ would encode the sound and the content information separately, as proven in Theorem. 

\begin{figure}
	\centering
	\includegraphics[width=0.35\textwidth]{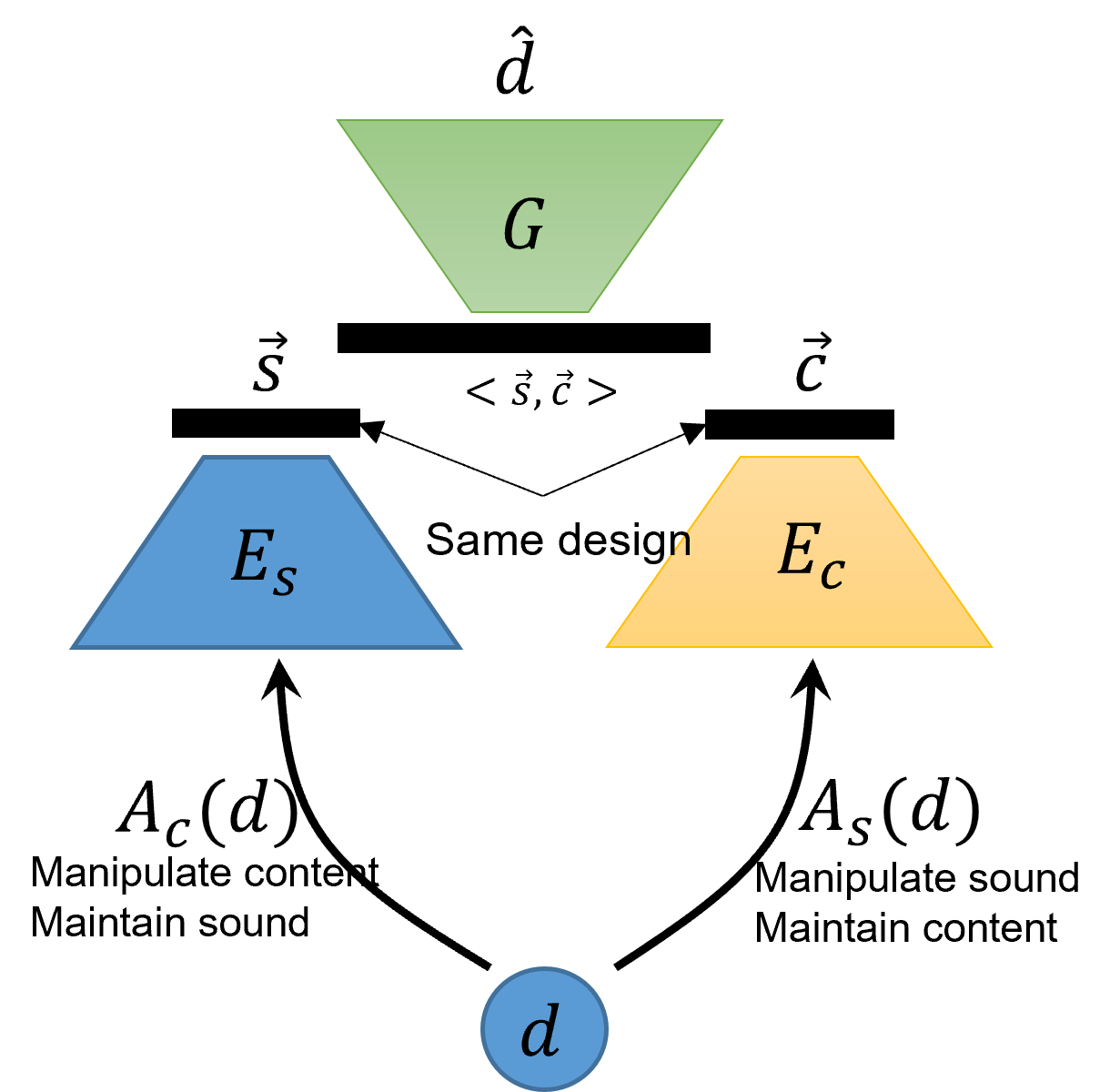}
	\centering	
	\caption{Autodecompose architecture for speaker recognition.}
\end{figure}

We use the mel-spectrum of the audio signals as input.

\subsection{Design of complementary augmentations}
One potential way for designing $ A_s $ is to use data labels: For a given audio signal $ d $ with a specific speaker, $ A_s(d) $ would be another utterance from the same speaker. This, however, would need pre-labeling the data or rely on assumptions (e.g., the next 1 second of the current utterance is likely to be uttered by the same speaker). We propose two label-independent augmentations that maintain the sound source while change the content of the audio signal: \textbf{Scrambling} mel spectrum along the time dimension, \textbf{masking} some parts of the mel spectrum along the time dimension (see Fig. \ref{fig:a_s}). 

\textbf{Time-domain scrambling}: The scrambling is done by selecting a random pivot point in the spectrum along the time axis and inverting the two segments. This is performed multiple times (randomly picked every time with a uniform random number between 5 to 20) with different random pivot points. This augmentation maintains the sound source while changes the content. The larger the number of random pivots, the more the content would be different from the original. 

\textbf{Time-domain masking}: Another augmentation we found effective was to select a segment of the mel-spectrum along the time axis and replace that segment with zeros. Our experiments showed that this augmentation can also add value if it is done for only few segments (removing 2 segments in our experiments) with short lengths each (two frames of the mel spectrum). 

Using these two augmentations, the content of $ d $ would be manipulated but the sound source would remain untouched, i.e., these two augmentations generate a new content (although meaningless) uttered by the same speaker. In terms of calculations, these two augmentations are quick to calculate and apply. 

\begin{figure}
	\label{fig:a_s}
	\centering
	\includegraphics[width=0.45\textwidth]{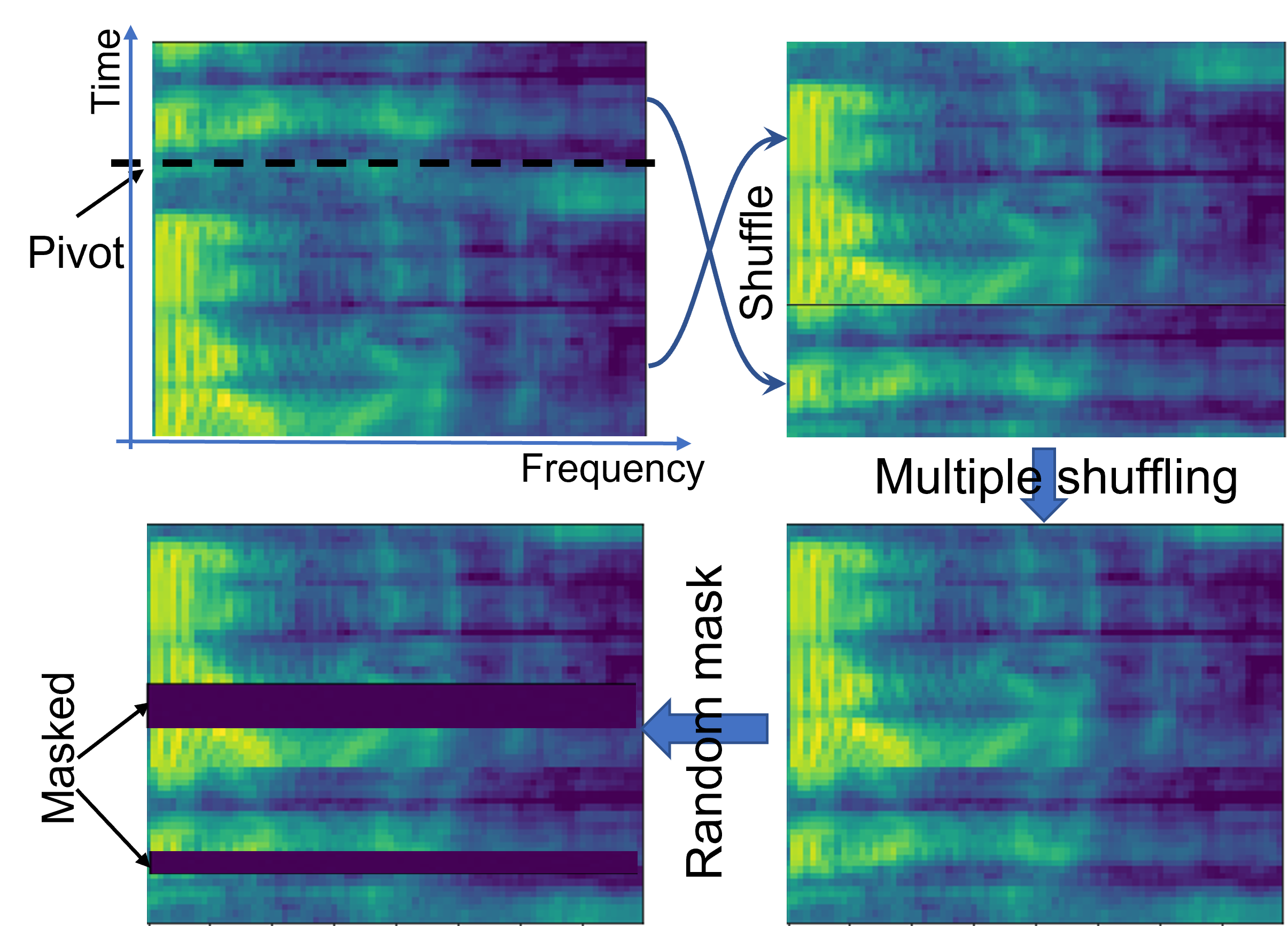}
	\centering	
	\caption{Augmentation $ A_s $}
\end{figure}

To design $ A_c(d) $, one can consider using the phoneme data and find the same word uttered by another speaker. This, however, requires accessing the text of the utterances. We propose two manipulations that are independent of any labeling: Frequency-domain stretch/shrink and masking frequency bands (see Fig. \ref{fig:a_c})

\textbf{Frequency-domain shrinking/stretching}: The shrinking or stretching along the frequency dimension leads to changing the audio pitch, which would change the perceived sound source (voice). To do so, we first stretch/shrink the mel-spectrum along the frequency dimension by some percentage and then resample the new spectrum to the original number of mel channels using an interpolation algorithm (used cubic Spline in our implementations). This would impact change the pitch of the voice. We stretch/shrink the mel spectrum randomly by maximum of $ 15\% $ and minimum of $ 2\% $, sampled from a uniform distribution. 

\textbf{Frequency-domain masking}: We randomly select a few segments (up to 15 segments), with maximum length of 5 mel bins along frequency dimension and replace them by zeros. The low frequency mel bins (10 bins corresponding to the lowest frequency information) were left out of this manipulation as low frequency information carry most of the content. 

\begin{figure}
	\label{fig:a_c}
	\centering
	\includegraphics[width=0.45\textwidth]{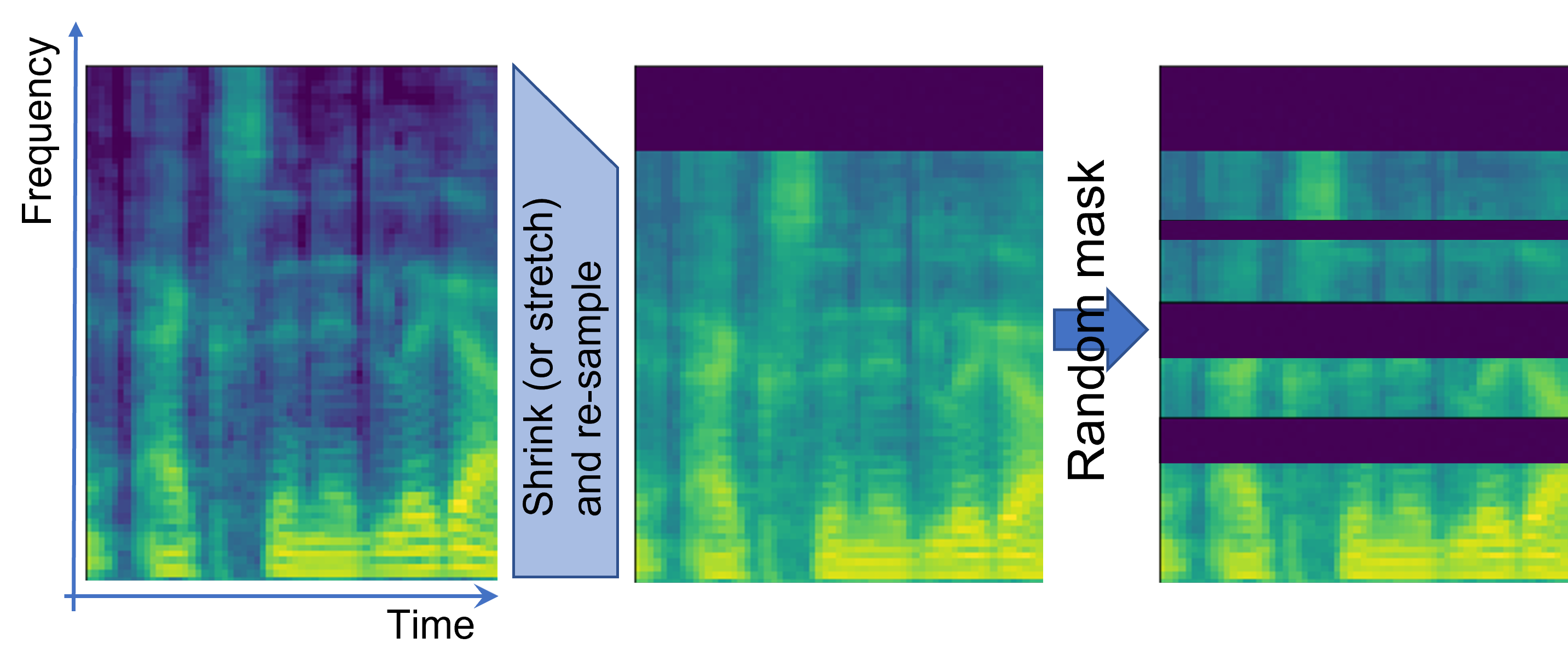}
	\centering	
	\caption{Augmentation $ A_c $}
\end{figure}

\subsection{Overfitting}

The stochastic nature of the complementary augmentations used for autodecompose is expected to combat overfitting, making this approach effective even if the sample size is small comparing to the complexity of the model. We test this hypothesis further in the paper where we use a large model with large embedding capacity on a small amount of data.

\section{Experiments}
We provide results of multiple experiments where we used autodecompose for a speaker recognition task and for content encoding task.

\subsection{Overall experiments setup}

\textbf{Datasets}: We used two datasets for our training and testing purposes: LibriSpeech train-clean-100 and test-other datasets, as well as a set of hand-picked YouTube videos all containing some meetings or panel discussions with variety of qualities and languages. YouTube dataset was used for pre-training only, while the Librispeech datasets were used for pre-training, fine tuning, and testing in different experiments. The total length of the audios extracted from YouTube videos was close to 24 hours, included publicly available meetings sessions and panel discussions. 

\textbf{Data preparations}: We first re-sampled the raw audio signals in $ 16k $ hz sampling rate and then filtered them with a band pass filter in the interval $ [90, 7600] $ hz. The mel spectrum was calculated per audio signal using number of FFT of 256, hop-length of 256, FFT length of 256, and number of mel channels equal to 80. This generates a mel-spectrum, each \textit{time bin} is 0.016 seconds, represented by 256 samples, in 80 \textit{frequency bins} (mel), and 0.0 seconds of overlap with the next time bin. We then cropped this mel spectrum in the time dimension over every 64 time bins of the spectrum, generating spectrums with dimension of 64 by 80 (corresponding to 1.024 second of audio signal). 

\textbf{Architectures}: We experimented with multiple architectures for our model. We built those architecture by combining different encoder and decoder choices. In all tests, we used the same architecture for the DSP and CSP encoders ($ E_s $ and $ E_c $). Here are the architecture choices we used:

\begin{itemize}
	\item Large encoder: 3 conv. (512 filters) + BNR + 3 LSTM (256 units) + embd128
	\item LSTM encoder: 3 LSTM (256 units) + embd128
	\item Conv encoder: 3 conv. (512 filters) + BNR + embd128
	\item Dense encoder: 1 dense (512 neurons) + BNR + embd128
	\item Large decoder: 2 conv. (512 filters) + BNR + 2 LSTM (512 units) + out
	\item LSTM decoder: 2 LSTM (512 units) + out
	\item Conv decoder: 2 conv. (512 filters) + BNR + out 
	\item Dense decoder: 1 dense (1024 neurons) + out
\end{itemize}

In these architectures, embd128 refers to dense layer (128 dimensions) with linear activation, BNR refers to batch norm + ReLU, out refers to dense (80 neurons), "conv." refers to one-dimensional convolutional layers, "LSTM" refers to stacked long short-term memory layers, and "dense" refers to dense layers.

\subsection{Comparison across architectures}
For each architecture, we first trained the autodecompose model on utterances from 10 speakers taken randomly from LibriSpeech dataset \cite{panayotov2015librispeech} (this group of people remained the same for all architectures), train-clean 100, without any labels. We then used 10 seconds of each person's voice, selected randomly of the utterances, to train a classifier (logistic regression) on the embedded signals and left the rest of the data for testing the classifier. We pre-training the models for 50 steps and used $ E_c $ encoder. Results are reported in Figure \ref{fig:architectures}.

\begin{figure}
	\centering
	\includegraphics[width=0.45\textwidth]{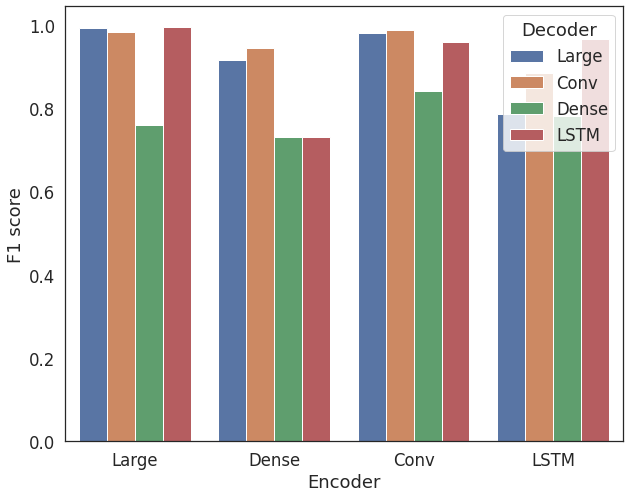}
	\centering	
	\caption{Different architectures for the encoders and the decoder lead to different performance.}
	\label{fig:architectures}
\end{figure}

Architecture with Large encoder and decoder has the best performance. However, simple architectures also performed comparatively, showing that the autodecompose strategy leads the encoders to learn the Desired Semantic Property (DSP). From hereon, we use Architecture Large encoder and decoder for all of our tests. 

\subsection{Testing the sound source encoder}
We test the ability of the model on encoding the sound source and compare the sound source encoder with supervised voice recognition models.

We first pre-trained an autodecompose model on an "unlabeled dataset" (no people id was provided). Then, we used one of the autodecompose encoders to embed samples in a "labeled dataset". We then used a portion (10, 15, 30, 60, or 120 seconds) of the embedded labeled dataset to train a classifier (logistic regression). Finally, we used the held-out data for testing the classifier on the embedding. To demonstrate the opportunity for fine-tuning, we also run experiments in which we "fine-tune" the pre-trained model on the labeled dataset. The fine-tuning takes places \textit{without} access to any labels and performed to enable the model to capture voice and audio characteristics in the the environment of the new dataset.

\subsubsection{Performance comparison}
We used different sources for labeled and unlabeled datasets for our tests, summarized in Table \ref{tbl:tests}.

\begin{table}
	\centering
	\caption{Datasets used}
	\begin{tabular}{|l|l|l|l|l|}
		\textbf{Test} & \textbf{Unlabeled} & \textbf{Labeled} & \textbf{Fine-tuning} & \textbf{Encoder} \\
		1 & LibriSpeech & In-D & No & $E_s$ \\
		2 & LibriSpeech & OoD & No & $E_s$ \\
		3 & LibriSpeech & OoD & Yes & $E_s$ \\
		4 & YouTube & OoD & No & $E_s$ \\
		5 & YouTube & OoD & Yes & $E_s$ \\
		6 & YouTube & OoD & Yes (10s) & $E_s$ \\
		7 & LibriSpeech & OoD & Yes (10s) & $E_s$ \\
		8 & YouTube & OoD & No & $E_c$ \\
		9 & LibriSpeech & OoD & Yes & $E_c$ \\
	\end{tabular}
	\centering		
	\label{tbl:tests}
\end{table}

In this table, "LibriSpeech" refers to the LibriSpeech dataset, "YouTube" refers to a dataset of videos from YouTube, OoD refers to "Out-of-distribution" (a dataset of speakers that is different from the dataset used as the unlabeled data), In-D refers to in-distribution (labeled data and unlabeled data were the same). The "fine-tuning" column indicates whether the autodecompose model was fine-tuned on labeled data before the classification tests. The "Encoder" column indicates which encoder (either $E_s$ or $E_c$) was used to embed the labeled data. In all cases, the fine-tuning took place for 50 iterations. All labeled data was used for fine-tuning (note, fine-tuning does not use the labels but only the signals) in all tests except for tests 6 and 7 where only 10 seconds of labeled data was used for fine tuning (emulating situations where the unlabeled data is also rare). The base models were trained on unlabeled data for 500 epochs \footnote{Pre-trained models are available for download here: \url{https://github.com/rezabonyadi/autodecompose}}. The LibriSpeech samples were taken from the train clean 100 folder in the dataset. The 30 speakers for pre-training the base autodecompose were chosen randomly (close to 12 hours of speech signals), but kept the same across all tests and training. The same is true for people selected for out-of-distribution tests (close to 11 hours of speech). We randomly selected 30 other speakers from LibriSpeech train-clean-100 dataset for testing purposes in this subsection.

Results in Table \ref{tbl:speaker_recog} show that the pre-trained autodecompose model performs well on speaker identification tasks. In Test 1, where the unlabeled and labeled data are taken from the same dataset, the model achieves high average F1 scores, ranging from 99.2 to 99.8. This indicates that the autodecompose model is able to effectively capture the characteristics of the speakers in the dataset without any need for labels when the labeled and unlabeled data come from the same distribution.

\begin{table}
	\centering
	\caption{Average F1 score for recognizing people voices under different settings using pre-trained autodecompose models.}
	\begin{tabular}{cccccc}
		&10 sec&15 sec&30 sec&60 sec&120 sec\\
		Test 1& 99.2& 99.4& 99.6& 99.8& 99.8\\
		Test 2& 93.1& 94.3& 96.5& 97.3& 97.9\\
		Test 3& 98.9& 99.1& 99.4& 99.5& 99.7\\
		Test 4& 82.2& 86.8& 92.2& 94.3& 95.6\\
		Test 5& 96.7& 97.2& 98.3& 98.7& 99.0\\
		Test 6& 90.4& 91.9& 95.3& 96.0& 97.0\\
		Test 7& 94.7& 95.3& 96.8& 97.6& 98.2\\
		Test 8& 25.1& 30.6& 39.4& 44.9& 52.9\\
		Test 9& 22.7& 26.1& 32.5& 37.7& 45.1\\				
	\end{tabular}
	\centering		
	\label{tbl:speaker_recog}
\end{table}

In Tests 2 and 4, where the unlabeled and labeled data are taken from different datasets, the model's performance is slightly lower. In Test 2, where no fine-tuning is performed, the average F1 scores range from 93.1 to 97.9, depending on the length of the labeled data used for training. In Test 4, where the unlabeled data is taken from YouTube videos, the average F1 scores are lower, ranging from 82.2 to 95.6. This suggests that the autodecompose model may not be as effective at capturing the characteristics of speakers from different datasets.  

However, unlabeled fine-tuning of the autodecompose model appears to improve its performance on speaker identification tasks. In Tests 3 and 5, where the autodecompose model is fine-tuned on the out-of-distribution data without labels, the average F1 scores are closer to those in Test 1, indicating that fine-tuning can help the model adapt to the characteristics of the speakers voices in the labeled dataset. In Tests 6 and 7, where the autodecompose model is fine-tuned on a small amount of labeled data (10 seconds only), the average F1 scores are also improved compared to Tests 2 and 4. This suggests fine-tuning even on a small amount of data without any labels has a significant impact on the accuracy of the model.  

Finally, using the $E_c$ encoder for embedding instead of the $E_s$ encoder leads to a significant decrease in performance, as shown in Tests 8 and 9. The average F1 scores are much lower in these tests, indicating that the $E_s$ encoder is more effective at capturing speaker-specific characteristics than the $E_c$ encoder.

Overall, the results suggest that the pre-trained autodecompose model is effective at speaker identification, particularly when the unlabeled and labeled data are taken from the same dataset or the model is fine-tuned on the out-of-distribution data. Using the $E_s$ encoder for embedding is more effective than using the $E_c$ encoder.

Figure \ref{fig:embed_voices} shows the tsne map of encoded labeled signals for Tests 1 to 6.  

\begin{figure}
	\centering
	\begin{tabular}{cc}
		\includegraphics[width=0.23\textwidth]{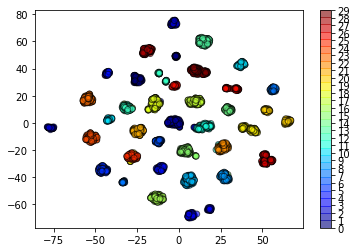}&\includegraphics[width=0.23\textwidth]{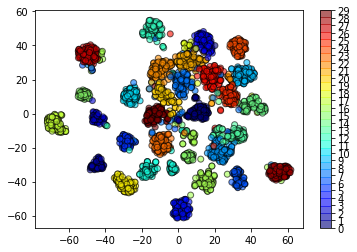}\\
		(a) Test 1&(b) Test 2\\
		\includegraphics[width=0.23\textwidth]{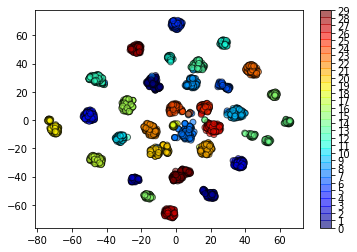}&\includegraphics[width=0.23\textwidth]{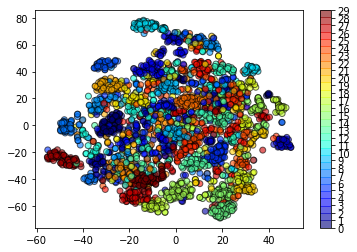}\\
		(c) Test 3&(d) Test 4\\
		\includegraphics[width=0.23\textwidth]{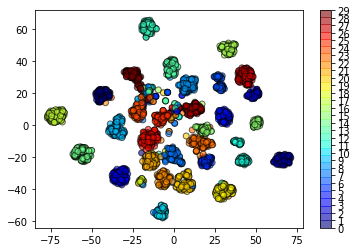}&\includegraphics[width=0.23\textwidth]{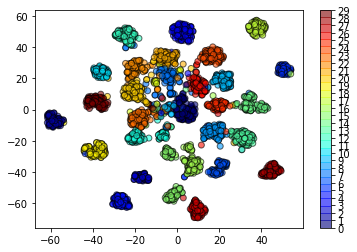}\\
		(e) Test 5&(f) Test 7\\				
	\end{tabular}
	\centering	
	\caption{t-SNE map of the embedding in encoder $ E_s $ under different tests.}
	\label{fig:embed_voices}
\end{figure}

\subsubsection{Comparison with supervised models}
We further compared our model with two models trained in a supervised manner \cite{wan2018generalized,desplanques2020ecapa} and reported the results for recognizing people's voices in audio data in Table \ref{tbl:libri_test}.
 
\begin{table}
	\centering
	\caption{Average F1 score for recognizing people voices under different settings using pre-trained autodecompose models (YouTube).}
	\begin{tabular}{ccccccc}
		model&fine-tuned&10 sec&15 sec&30 sec&60 sec&120 sec\\
		YouTube& No&         83.6& 87.5& 92.3& 94.5& 95.7\\
		YouTube& Yes (10 s)& 90.4& 92.9& 95.3& 96.7& 97.3\\
		YouTube& Yes&        96.3& 96.8& 98.1& 98.7& 99.0\\
		Libri& No&           90.1& 92.9& 94.7& 96.2& 97.2\\					
		Libri& Yes (10 s)&   91.3& 93.3& 95.1& 96.5& 97.3\\			
		Libri& Yes&          \textbf{97.6}& \textbf{98.4}& \textbf{98.9}& \textbf{99.3}& \textbf{99.3}\\			
		GE2E& -&           93.9& 94.8& 95.4& 96.3& 96.9\\	
		TDNN& -&              95.7& 96.8& 97.3& 97.6& 98.0\\
	\end{tabular}
	\centering		
	\label{tbl:libri_test}
\end{table}

The algorithm was tested under different settings, including whether it was fine-tuned on the labeled data (in a self-supervised manner where no labels were used) and the length of the audio used for training the classifier. The unlabled data was either taken from YouTube or from LibriSpeech dataset, train-clean-100 (30 speakers were chosen from this dataset). We tested the methods on LibriSpeech data, test-other dataset (our "labeled" dataset), and presented the results in terms of average F1 score (this dataset contains 40 speakers).

Autodecompose algorithm performs well in this task, with an average F1 score of 83.6-99.3 depending on the specific settings used. This suggests that the self-supervised approach used by the algorithm is effective in recognizing people's voices in audio data. The table also shows that fine-tuning the Autodecompose algorithm on the labeled data leads to significant improvements in performance. For example, when the pre-trained model on LibriSpeech was fine-tuned on the entire labeled dataset, it achieved an average F1 score of 97.6-99.3, which is an improvement of 2.1-7.8 percentage points over the score of 90.1-97.2 achieved when the algorithm was not fine-tuned. Even when the algorithm was fine-tuned on just 10 seconds of labeled data, it still achieved an average F1 score of 91.3-97.3, which is an improvement of 1.3-2.1 percentage points. The gain was higher when the algorithm was pre-trained on YouTube data, where using 10 sec of data for fine-tuning lead to 1.7-8.1 percentage of improvement. These results suggest that fine-tuning the algorithm on a small amount of labeled data can be effective for improving its performance.

In addition, the performance of the Autodecompose algorithm is comparable to that of the supervised models GE2E and TDNN, although our method is self-supervised. In most cases, the F1 scores achieved by the Autodecompose algorithm are similar to or slightly higher than those of the supervised models. This suggests that the self-supervised approach used by the Autodecompose algorithm is a viable alternative to traditional supervised methods for recognizing people's voices in audio data.

\subsubsection{Overfitting test}
To evaluate the resistance of our autodecomposed architecture to overfitting, we tested its generalization ability using a large model pre-trained on a small dataset. The model architecture consisted of 3 conv. (512 filters) + BNR + 3 LSTM (1024 units) + embd1024 as encoder and 2 conv. (1024 filters) + BNR + 2 LSTM (1024 units) + out as decoder. Given the large size of the model and the limited number of data points, this model had the potential to overfit.

To perform the experiment, we selected speech data from three speakers in the LibriSpeech train-clean-100 dataset, each with 20 seconds of utterances, for a total of 60 seconds of speech data. The model was trained from scratch for 5000 epochs on this small data.

\begin{figure}
	\centering
	\includegraphics[width=0.43\textwidth]{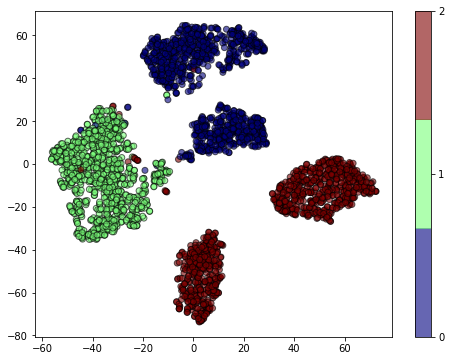}
	\centering	
	\caption{t-SNE map of embedding by $ E_s $ from a very large architecture with large number of embedding dimensions. The model was trained from scratch on a small data set (20 seconds of utterances from three speakers only). This shows the architecture is resistant to over-fitting, even when the model is flexible and number of data points is small.}
	\label{fig:overfit}
\end{figure}

Figure \ref{fig:overfit} visualizes the embedding space of $E_s$ after pre-training from scratch on 60 seconds of speech data. Despite the small number of samples, the voices of the three speakers are clearly distinguishable in the embedding space.

To further evaluate this large pre-training model, we embedded all 4174 seconds of speech from the three speakers using $E_s$. Then, we randomly selected 30 seconds of embedded speech (10 seconds from each speaker) and trained a logistic regression model to predict the speaker ID. The model was evaluated on held-out data and achieved an average F1 score of 98.6\% over 100 runs of the test. This result demonstrates the robustness of our proposed model, as it shows a high tolerance against overfitting even when using a large model with limited data.

\subsection{Content encoder}
We evaluated the performance of autodecompose in embedding audio signals by using the pre-trained encoders ($E_c$ and $E_s$) on a dataset of 30 randomly selected speakers from the LibriSpeech train-clean-100 database. First, we determined if the encoded signals' time-bins were correlated with phonemes in the audio signal by comparing the encoding using $E_c$, $E_s$, and the original mel spectrum. We trained a logistic regression model on 50\% of the embedded utterances, with the phonemes serving as labels, and evaluated the performance using the F1 score on the held-out utterances. We performed this test 100 times on randomly selected utterances for training and testing. The results, shown in Figure \ref{fig:phenoms_comp}, indicate that $E_c$ provides more relevant features for recognizing phonemes compared to $E_s$ or the original mel spectrum.

\begin{figure}
	\centering
	\includegraphics[width=0.45\textwidth]{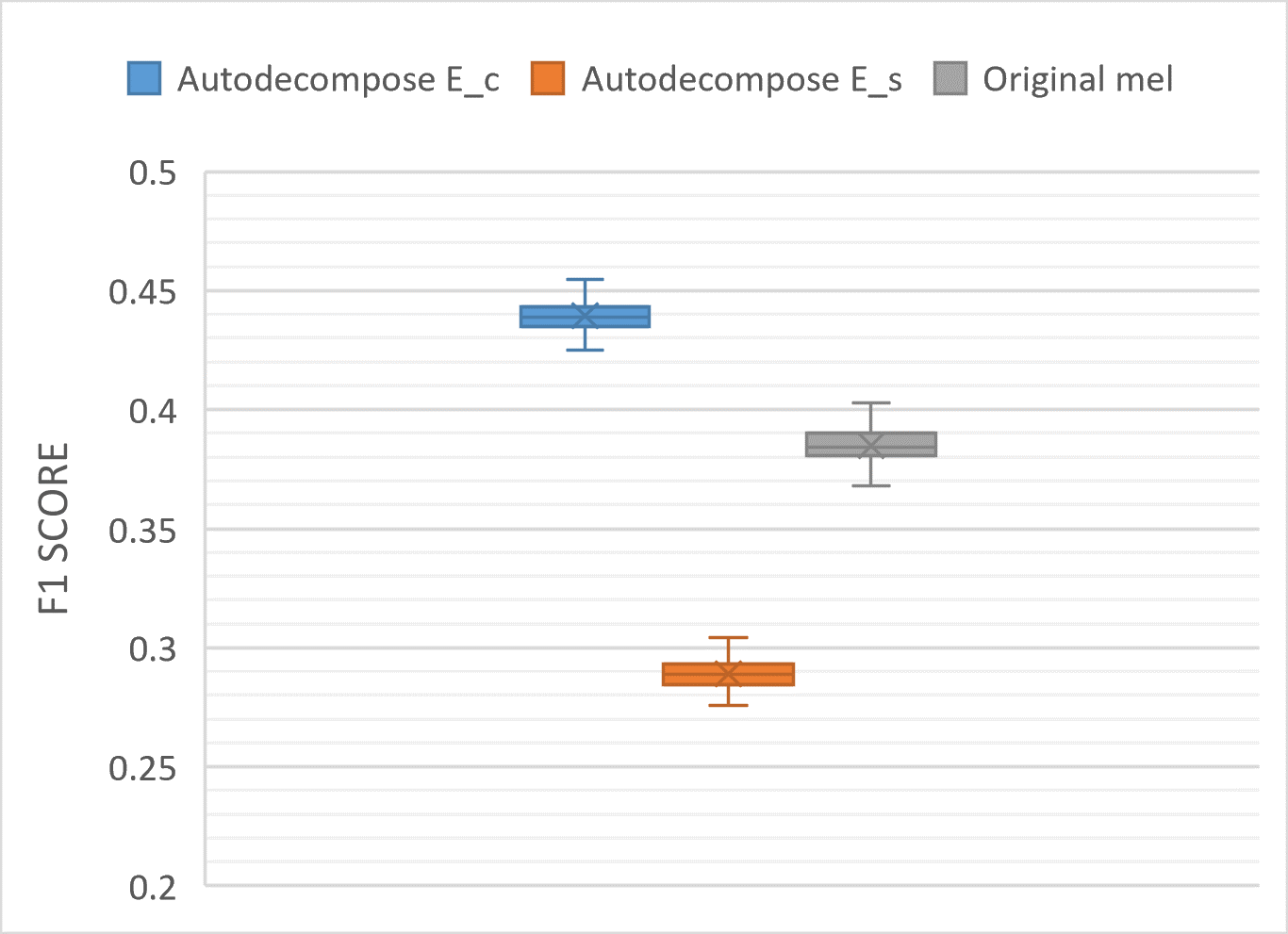}
	\centering	
	\caption{Comparison across using $ E_c $, $ E_s $, and original mel spectrum for recognizing phonemes using a logistics regression. F1 score is the average on the tests.}
	\label{fig:phenoms_comp}
\end{figure}

We also tested the correlation between the encoded audio signals and the words uttered by different speakers. To test this, the audio signals were embedded using $E_s$, $E_c$, or the original mel spectrum. The audio signals were padded to enforce the same length per word, and only the top 80 most frequently used words (out of a total of over 5000 words) were selected. However, the top 40 most frequently used words were excluded because they were short and common words, such as "the", "and", and "can", which could have biased the results. The remaining 40 words were used for comparison, which had a frequency of between 300 to 100 times\footnote{The actual words used were: "more", "has", "did", "now", "than", "only", "our", "some", "about", "little", "well", "two", "after", "upon", "any", "see", "came", "before", "other", "down", "very", "day", "over", "can", "like", "again", "must", "way", "back", "good", "house", "these", "such", "come", "made", "how", "its", "first", "never", "may".}. The results are shown in Figure \ref{fig:embed_speech}, which depicts the t-SNE visualization of the embeddings. It is clear that the features extracted by $ E_s $ are not related to the words uttered or the phonemes (Figure \ref{fig:embed_speech} (e, f)). 

\begin{figure}
	\centering
	\begin{tabular}{cc}
	\includegraphics[width=0.25\textwidth]{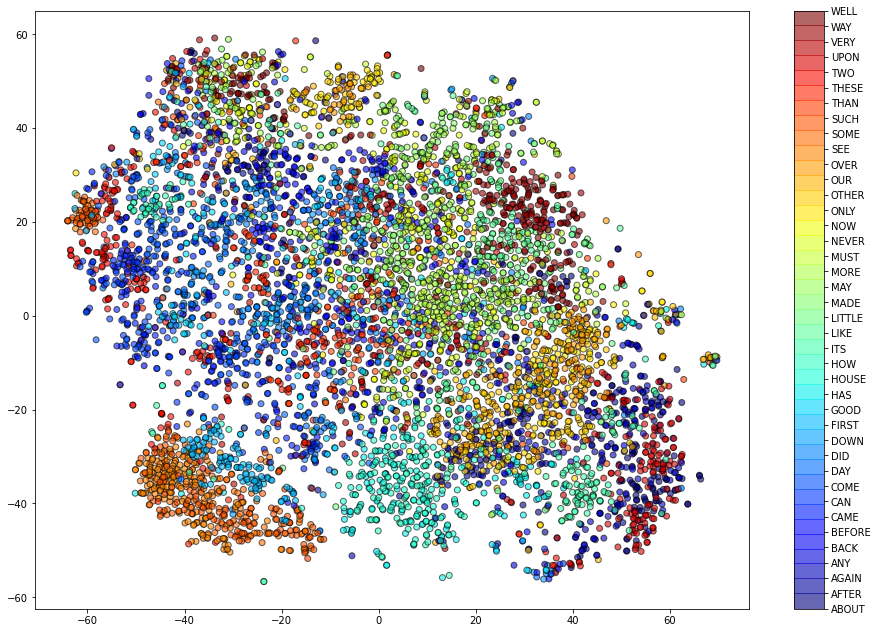}&\includegraphics[width=0.22\textwidth]{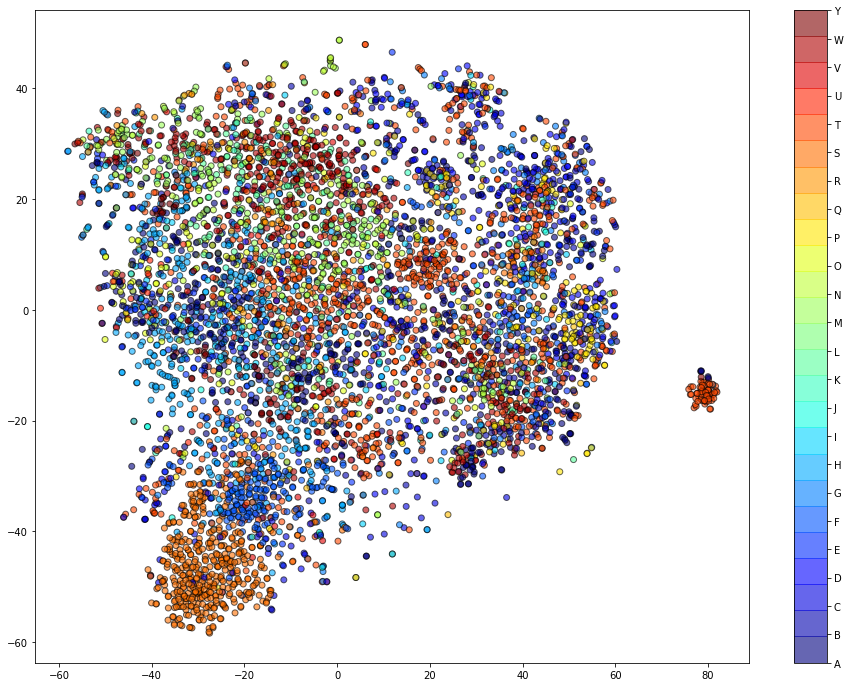}\\
	(a) &(b)\\	
	\includegraphics[width=0.25\textwidth]{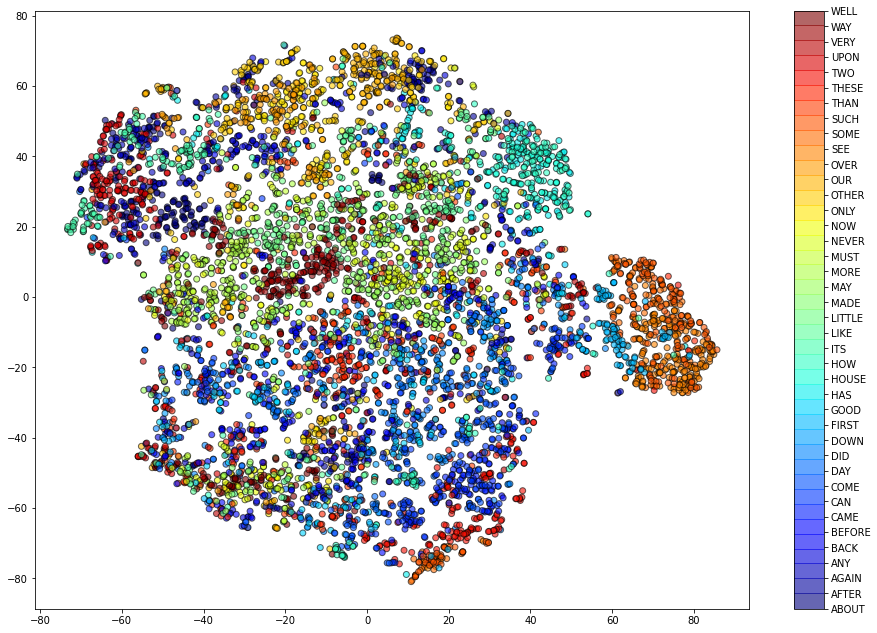}&\includegraphics[width=0.22\textwidth]{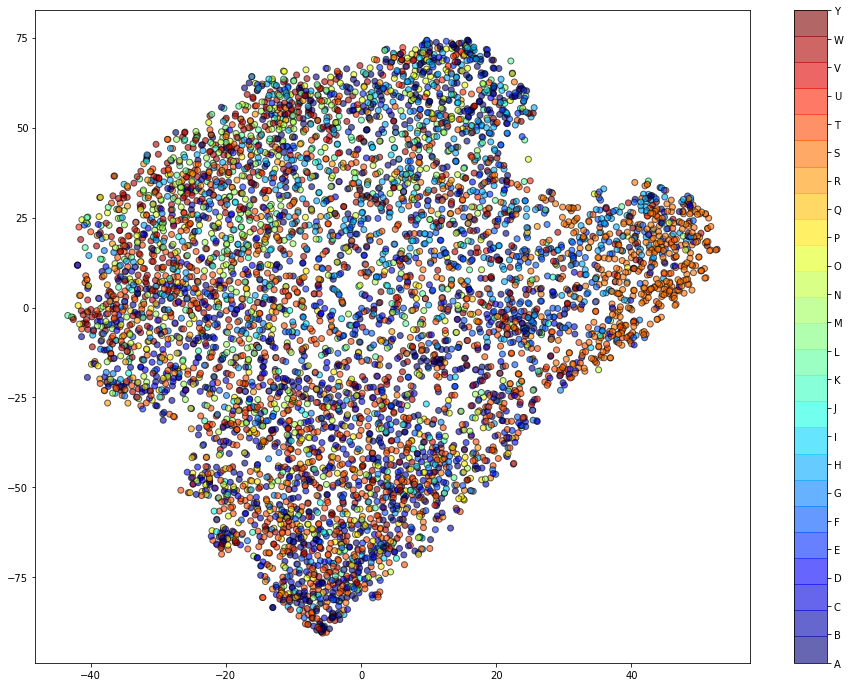}\\
	(c) &(d) \\
	\includegraphics[width=0.25\textwidth]{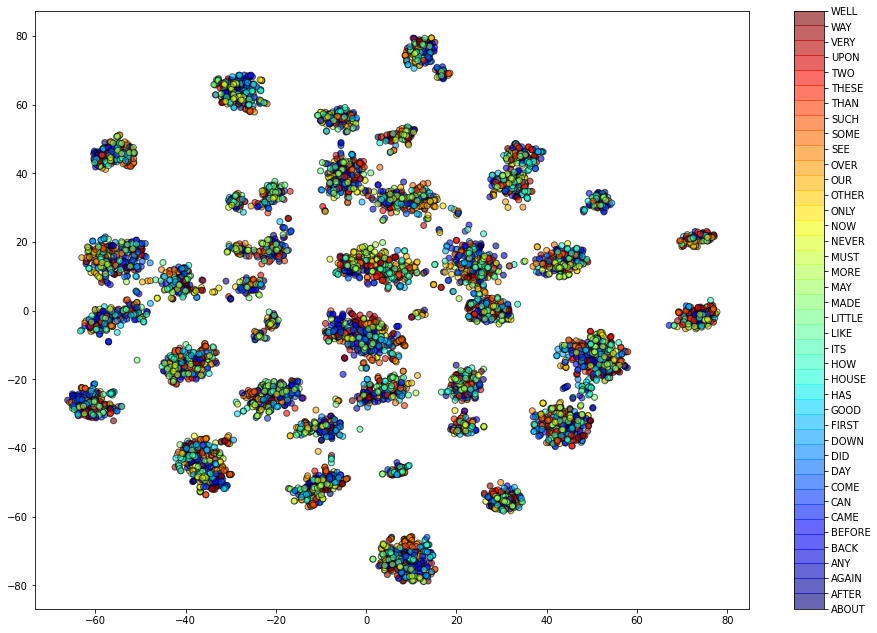}&\includegraphics[width=0.22\textwidth]{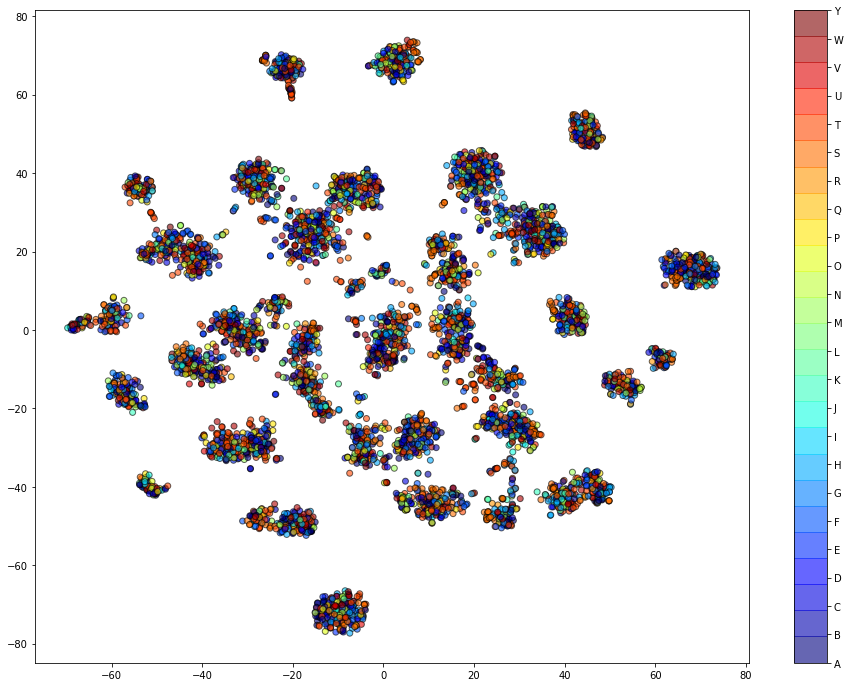}\\
	(e) &(f) \\
	\end{tabular}
	\centering	
	\caption{t-SNE map of (a) selected words embedded by $ E_c $, (b)  phenoms embedded by $ E_c $, (c) selected words represented by original mel , (d) phenoms represented by original mel, (e) selected words embedded by $ E_s $, (f) phenoms embedded by $ E_s $. $ E_c $ has embedded the content successfully while $ E_s $ is correlated with sound source (voice).}
	\label{fig:embed_speech}
\end{figure}

Note that we do not expect the content encoder to have a perfect correlation with words or phonemes uttered. The reason is that the content includes the pitch, the stretch, and the emotion of the utterance in addition to the word uttered. One can use autodecompose recursively to isolate the uttered word using properly designed augmentations. 

\section{Discussion and Future Works}
\label{sec:conclusion}
This paper introduced autodecompose, an architecture that decomposes the raw data to semantically meaningful properties given two complementary augmentations. We proved that the encoders in the autodecompose architecture embed semantically independent properties of real signals, given the augmentations. We tested autodecompose on decomposition of audio signals to sound source (i.e., speaker id) and content of the audio signal. Our experiments showed that the encoder which embeds the speaker voice provides features that are suitable for speaker recognition, comparable (and usually more informative) with supervised learning methods. We also show that the model is less sensitive to overfitting and that the content encoder can be used for speech recognition tasks after further refinements.

\ignore{

\subsubsection*{Acknowledgments}
}
\medskip

\small

\bibliographystyle{IEEEtran}
\bibliography{reference}
\end{document}